\newtheoremstyle{tightdefinition} 
  {6pt}   
  {6pt}   
  {\normalfont}  
  {}      
  {\bfseries} 
  {.}     
  {0.5em} 
  {\thmname{#1}~\thmnumber{#2}\thmnote{~(#3)}} 
\theoremstyle{tightdefinition}
\newtheorem{definition}{Definition}
\algnewcommand{\Inputs}[1]{%
  \State \textbf{Inputs:}
  \Statex \hspace*{\algorithmicindent}\parbox[t]{\linewidth}{\raggedright #1}
}
\algnewcommand{\Output}[1]{%
  \State \textbf{Output:}
  \Statex \hspace*{\algorithmicindent}\parbox[t]{\linewidth}{\raggedright #1}
}
\algnewcommand{\Initialize}[1]{%
  \State \textbf{Initialize:}
  \Statex \hspace*{\algorithmicindent}\parbox[t]{\linewidth}{\raggedright #1}
}
\algnewcommand{\comm}[1]{ {\ttfamily\textcolor{blue}{// #1}} }
\begin{document}

\newcommand{\figsize}{0.7\linewidth}

\newcommand{\dataspace}{\mathbb{D}}
\newcommand{\datagen}{\mathcal{D}}
\newcommand{\model}{\datagen_{\theta}}
\newcommand{\dt}{D}
\newcommand{\syndata}{D_\mathrm{syn}}
\newcommand{\mt}{\theta} 
\newcommand{\parameter}{\theta}
\newcommand{\modelspace}{\Theta}
\newcommand{\train}{\mathcal{A}}
\newcommand{\metric}{\psi}
\newcommand{\ravg}{R^\text{T}}
\newcommand{\ravgauc}{R_\text{avg}^{AUC}}
\newcommand{\ravgfpr}{R_\text{avg}^{TPR}}
\newcommand{\rsp}{R^\text{MS}}
\newcommand{\rspauc}{R_\text{ms}^{AUC}}
\newcommand{\rspfpr}{R_\text{ms}^{TPR}}
\newcommand{\xt}{x}
\newcommand{\attack}{\phi}
\newcommand{\score}{s}
\newcommand{\todo}[1]{\textbf{\textcolor{red}{quad TODO: #1}}}
\newcommand{\E}{\mathbb{E}}
\newcommand{\alphasp}{\hat \alpha^\text{MS}_\phi}
\newcommand{\alphaavg}{\hat \alpha^\text{T}_\phi}
\newcommand{\betasp}{\hat \beta^\text{MS}_\phi}
\newcommand{\betaavg}{\hat \beta^\text{T}_\phi}

\title{Lost in the Averages: Reassessing Record-Specific Privacy Risk Evaluation\thanks{This is an extended version of the paper published at the Data Privacy Management (DPM) workshop at ESORICS 2025.}}

\date{}

\author{
    Nata\v sa Kr\v co\textsuperscript{1} \quad
    Florent Gu\' epin\textsuperscript{1} \quad
    Matthieu Meeus\textsuperscript{1} \quad
    Bogdan Kulynych\textsuperscript{2} \quad
    Yves-Alexandre de Montjoye\textsuperscript{1} \\
    \vspace{0.5em}
    \textsuperscript{1}Imperial College London \\
    \textsuperscript{2}Lausanne University Hospital (CHUV), Lausanne, Switzerland
}

\maketitle
\begin{abstract}
  Synthetic data generators and machine learning models can memorize their training data, posing privacy concerns. Membership inference attacks (MIAs) are a standard method of estimating the privacy risk of these systems. The risk of individual records is typically computed by evaluating MIAs in a record-specific privacy game. We analyze the record-specific privacy game commonly used for evaluating attackers under realistic assumptions (the \textit{traditional} game)---particularly for synthetic tabular data---and show that it averages a record's privacy risk across datasets. We show this implicitly assumes the dataset a record is part of has no impact on the record's risk, providing a misleading risk estimate when a specific model or synthetic dataset is released. Instead, we propose a novel use of the leave-one-out game, used in existing work exclusively to audit differential privacy guarantees, and call this the \textit{model-seeded} game. We formalize it and show that it provides an accurate estimate of the privacy risk posed by a given adversary for a record in its specific dataset. We instantiate and evaluate the state-of-the-art MIA for synthetic data generators in the traditional and model-seeded privacy games, and show across multiple datasets and models that the two privacy games indeed result in different risk scores, with up to 94\% of high-risk records being overlooked by the traditional game. We further show that records in smaller datasets and models not protected by strong differential privacy guarantees tend to have a larger gap between risk estimates. Taken together, our results show that the model-seeded setup yields a risk estimate specific to a certain model or synthetic dataset released and in line with the standard notion of privacy leakage from prior work, meaningfully different from the dataset-averaged risk provided by the traditional privacy game.
\end{abstract}

\section{Introduction}\label{sec:introduction}

Models ranging from synthetic data generators (SDGs) to machine learning (ML) models have been shown to memorize their training data, potentially allowing attackers to tell whether specific records were used for training \cite{stadler2022synthetic,houssiau2022tapas,meeus2023achilles,annamalai2023linear,hayes2019logan,pyrgelis2018knock,guan2024zero} or even reconstruct entire training examples \cite{balle2022reconstructing,yang2019neural,wang2019beyond,he2019modelinversion}. As models are increasingly trained on personal and sensitive data---particularly in domains such as healthcare, law, and finance \cite{osuala2025medicine, chhikara2025predictive, cao5126964assessing}---concerns about their implications for privacy continue to grow.

Membership inference attacks (MIAs) have become the standard approach for empirically estimating the privacy risk of synthetic data and ML models~\cite{kumar2020mlprivacy, shokri2017membership, carlini2022membership, stadler2022synthetic, pollock2024free}. MIAs aim to determine whether a target record was included in the training dataset of a given model. They can pose a direct privacy risk, and also provide an upper bound on the performance of other attacks such as attribute inference or data reconstruction~\cite{salem2023sok}. MIAs can be developed under varying assumptions, ranging from black-box access to the target model and no knowledge of the training dataset, to very strong attackers leveraging white-box access to the model and knowledge of all training records but the target. 

MIAs are evaluated in a controlled privacy game between an attacker and a data owner~\cite{yeom2018privacy, stadler2022synthetic}. We here study the record-specific privacy games used in existing literature which estimate how well an attacker can distinguish between models trained on the target record and those not. Record-specific privacy games are most often used in setups where the state-of-the-art attacks leverage record-specific information, such as for synthetic data generators~\cite{meeus2023achilles, houssiau2022tapas, stadler2022synthetic}, and for auditing formal privacy guarantees~\cite{annamalai2024theory}. In contrast, model-specific privacy games estimate the ability of an attacker to distinguish between records used to train a target model and those not. This type of privacy game is often used to evaluate MIAs against ML models~\cite{carlini2022membership, zarifzadeh2024lowcosthighpowermembershipinference, song2020systematicevaluationprivacyrisks, sablayrolles2019white, hayes2019logan, choquette2021label}.

\paragraph{Contributions.}

We analyze the \textit{traditional} privacy game commonly used to evaluate record-specific MIAs under realistic attacker assumptions. We show that, by using dataset sampling as a source of randomness, it averages the risk across datasets, implicitly assuming that a record's privacy risk is independent of the dataset it belongs to.

We instead formalize and propose a novel use of the leave-one-out game, here called the \textit{model-seeded} privacy game, to evaluate an MIA under realistic attacker assumptions. This approach is consistent with the standard notion of differential privacy~\cite{dwork2006differential,dwork2014algorithmic}, which captures a record’s risk with respect to a specific dataset. Unlike the traditional game, we fix the target dataset and use only the model seed as a source of randomness. We show that the attack success rates computed using this privacy game converge to the record's \textit{differential privacy distinguisher} (DPD) risk---consistent with the standard notion of privacy leakage in existing literature---whereas the traditional game results in a dataset-independent estimate.

We instantiate the state-of-the-art record-specific MIA for synthetic data and evaluate it in both the traditional and model-seeded privacy game across $2$ datasets and $2$ synthetic data generators. We observe significant differences between the risk estimates given by the model-seeded and traditional privacy games. For instance, $94\%$ of high-risk records are misidentified by the traditional privacy game for the Adult dataset and Synthpop generator.
We obtain similar results across experimental setups.

Finally, we show the gap between the traditional and model-seeded risk estimates to be generally higher for small and medium datasets (fewer than $10,000$ records), as often used in tabular synthetic data~\cite{stadler2022synthetic, meeus2023achilles, guepin2023synthetic}, and lower for large datasets, typically used for ML tasks~\cite{carlini2022membership}. We show that training with differential privacy with very strict $\varepsilon$ decreases the difference between traditional and model-seeded risk, likely due to overall lower MIA performance.

Taken together, we show that the commonly used privacy game for evaluating record-specific privacy risks can yield misleading estimates by averaging the risk across datasets. To address this, we propose to use the model-seeded privacy game, which provides more accurate risk estimates, aligning with differential privacy, critical for synthetic data or ML models trained on increasingly sensitive data.  

\section{Background}\label{sec:preliminary}
\subsection{Synthetic data generation}

We consider the setting of statistical learning over the space of records $\dataspace \subseteq \mathbb{R}^d$. We say that there exists a probability distribution $\datagen$ over $\dataspace$. We consider a dataset $D \in \dataspace^n$ that is i.i.d. sampled from the distribution: $D \sim \datagen^n$. Using dataset $D$, we train a model parameterized by $\parameter \in \modelspace$ using a training algorithm $\train: 2^{\dataspace} \rightarrow \modelspace$, where $2^{\dataspace}$ denotes the powerset of $\dataspace$. We use the term \emph{model} and its associated paramater vector $\parameter$ interchangeably.

A synthetic data generator (SDG) attempts to define a probability distribution that mimics statistical properties of the original data distribution $\datagen$.  A trained SDG, denoted as \(\model\) and parameterized with $\theta \leftarrow \train(D)$, can be used to generate a synthetic dataset \( \syndata \sim \model^n \). In the rest of the paper, we assume \(\syndata\) to have equal size as the training dataset used to train the model \( D \), i.e., \( |D| = |\syndata| = n \).

Practical SDGs can take various forms, such as probabilistic models (e.g., Bayesian networks \cite{baynet}) or deep generative models (e.g., generative adversarial networks (GANs) \cite{xu2019modeling}). In this work, we focus specifically on SDGs for tabular data \cite{baynet,synthpop} as attacks for this setting are inherently record-specific, making correct record-level evaluation particularly relevant (see~\cref{subsec:background_mia} for a more detailed review of attack development and methodology).

\subsection{MIA development}\label{subsec:background_mia}

For a \emph{target model} \(\mt \leftarrow \train(\dt)\), an MIA aims to infer whether a target record $\xt$ was included in $\dt$ (member) or not (non-member). For a fixed target record $\xt$, we denote by $\attack_{\xt}: \modelspace \rightarrow \{0, 1\}$ a MIA against target record $\xt$ and target model $\mt$. As we consider record-specific attacks, we drop the subscript $\xt$ when the target record is clear from context. In the rest of the section, we elaborate on the threat model considered for MIA development, the shadow modeling technique, and how an attacker can compute a membership score.

\paragraph{Threat model.}  
By \textit{threat model}, we refer to the assumptions made about the attacker's capabilities. We distinguish between the following assumptions:

\begin{enumerate}
    \item Dataset-level: the data the attacker has access to. The attacker may have no access to any real data from \(\mathcal{D}\)~\cite{guepin2023synthetic}, access to data drawn from the same distribution as \(\dt\)~\cite{stadler2022synthetic, houssiau2022tapas, meeus2023achilles}, or full access to \(\dt\) except for the knowledge of membership of \(\xt\), as considered in the threat model for the strong differential privacy attacker~\cite{jagielski2020auditingdifferentiallyprivatemachine,annamalai2024theory}.
    \item Model-level: access the attacker has to the trained model and their knowledge of the corresponding training configurations (e.g. model architecture, training hyperparameters). Model access can range from  black-box (query access) \cite{shokri2017membership,carlini2022membership} to white-box (full parameter access) ~\cite{nasr2019comprehensive,sablayrolles2019white,cretu2023investigating}.
\end{enumerate}

In this work, we adopt the standard assumptions for a \textbf{realistic record-specific attacker}~\cite{houssiau2022tapas, meeus2023achilles, guepin2023synthetic, stadler2022synthetic} in the context of tabular synthetic data. We assume the adversary has access to an \textit{auxiliary dataset} $D_{\text{aux}} \in 2^\dataspace$, which is drawn from the same distribution as \(\dt\) but does not overlap with it. For model access, we consider an attacker with black-box access to target model $\model$, able to query a synthetic dataset $\syndata$ sampled from $\model$, i.e. \(\syndata \sim \model^n\). Lastly, we assume the attacker to always have full knowledge of the exact training procedure $\train(\cdot)$ used to obtain \(\mt\) (e.g. model architecture, training hyperparameters).

\paragraph{Shadow modeling.}  
Shadow modeling is a technique used to develop MIAs by simulating the target model's training process. The attacker leverages their knowledge of the training data distribution (access to \( D_{\text{aux}} \)) to construct \textit{shadow datasets} $\{\smash{D_{\text{shadow}}^{(i)}} \mid i= 1 \ldots, N_{\text{shadow}}\}$ sampled from $D_{\text{aux}}$ and of the same size as \(\dt\). The attacker then explicitly designs `in' shadow datasets that include target record $\xt$ ($\xt \in \smash{D_{\text{shadow}}^{(i)}}$) and `out' shadow datasets that exclude it ($\xt \notin \smash{D_{\text{shadow}}^{(i)}}$). The attacker trains \textit{shadow models} $\{ \smash{\train(D_{\text{shadow}}^{(i)}}) \mid i=1,\ldots,N_{\text{shadow}} \}$ using the knowledge of the training procedure of the target model.
Thus, the attacker constructs a controlled set of models with known membership of the target record, which they can use to develop and refine the MIA.

\paragraph{Computing a membership score.} The membership prediction of an MIA is typically in the form of thresholding a \textit{membership score}  $\score_{\xt}: \modelspace \rightarrow \mathbb{R}$. We denote the attack as $\attack_{\xt}(\mt) = \mathbbm{1}[\score_{\xt}(\mt) \geq \gamma]$ for some given threshold $\gamma \in \mathbb{R}$.

Existing MIAs against SDGs under black-box access analyze the generated synthetic records, leveraging statistical measurements~\cite{stadler2022synthetic} and counting queries~\cite{meeus2023achilles, houssiau2022tapas} to extract features from the generated data. These are then used to train meta-classifiers to predict the membership of the target record. Notably, these membership scores are inherently \textit{record-specific}, driving the development and evaluation of MIAs tailored to individual records. 

In contrast, MIAs for predictive ML models are usually applicable across an arbitrary number of records. These attacks usually rely on the loss of the model on the target record~\cite{yeom2018privacy} to compute a membership score. Such scores can be computed for different records while remaining on a consistent scale, making them comparable across samples.

\subsection{Differential privacy and its hypothesis-testing interpretation}\label{sec:dp-risk}

Differential privacy (DP) is a formal privacy guarantee that limits the contribution of any single record in statistical learning. Under the classical definition by~\citet{dwork2014algorithmic}, a randomized training algorithm $\train(D)$ is differentially private if the inclusion or exclusion of any single record in $D$ will not significantly modify the resulting model distribution:
\begin{definition}\label{def:dp}
    A randomized training algorithm $\train(D)$ satisfies $(\varepsilon, \delta)$-DP if for any measurable subset $E$ of the model space $\modelspace$ and any partial dataset $\bar D$ and any record $x \in \dataspace$, we have:
    $$
    \begin{aligned}
    \Pr[\train(\bar D) \in E] & \leq e^\varepsilon \Pr[\train(\bar D \cup \{x\}) \in E] + \delta \\
     \Pr[\train(\bar D \cup \{x\}) \in E] & \leq e^\varepsilon \Pr[\train(\bar D) \in E] + \delta
     \end{aligned}
    $$
\end{definition}

This classical definition in terms of a difference between probability distributions has been shown to have an interpretation in terms of hypothesis testing~\cite{wasserman2010statistical,kairouz2015composition,dong2022gaussian}, or equivalently, in terms of success rates of worst-case MIAs~\cite{kulynych2024attack}. Consider the following MIA setting in which an adversary with access to a partial dataset $\bar D$, the target record $\xt$, and a model $\mt$, aims to tell whether $\mt$ comes from $\train(\bar D)$ or $\train(\bar D \cup \{ x \})$:
\begin{equation}\label{eq:hypothesis-test-def}
    H_0: \mt \sim \train(\bar D) \quad H_1: \mt \sim \train(\bar D \cup \{\xt\}).
\end{equation}
We omit the analogous case of $H_0$ corresponding to $\train(\bar D \cup \{x\})$ and $H_1$ to $\train(\bar D)$ for simplicity of exposition. 
Given a \emph{distinguisher} $\phi: \Theta \rightarrow [0, 1]$ which outputs $1$ to guess the membership of $\xt$ in the training dataset ($H_1$), and $0$ to guess its non-membership ($H_0$), we can characterize its success by its false positive rate (FPR) $\alpha_\phi$ and false negative rate (FNR) $\beta_\phi$:
\begin{equation}
    \label{eq:hypothesis-test}
    \begin{aligned}
    \alpha_\phi = \E_{\mt \sim \train(\bar D)}[\phi(\mt)], \quad
    \beta_\phi = 1 - \E_{\mt \sim \train(\bar D \cup \{x\})}[\phi(\mt)]
    \end{aligned}
\end{equation}
To analyze the privacy guarantees within this setting, we can consider the worst-case distinguisher $\phi^*_\alpha$ which achieves the lowest FNR at a given level of FPR $\alpha$:
\begin{equation}
    \phi^*_\alpha = \arg \inf_{\phi:~\Theta \rightarrow [0, 1]} \{\beta_\phi ~\mid~ \alpha_\phi \leq \alpha \}.
\end{equation}
Such an optimal attack always exists and can be constructed via Neyman-Pearson's lemma~\cite{dong2022gaussian}. 
An algorithm $\train(\cdot)$ satisfies $(\varepsilon, \delta)$-DP if and only if the FNR of the optimal attack is lower bounded as follows:
\begin{equation}
    \beta_{\phi^*_\alpha} \geq \min \{0,\ 1 - e^\varepsilon\alpha - \delta,\ e^{-\varepsilon}(1 - \alpha - \delta)\},
\end{equation}
for any given level of FPR $\alpha \in [0, 1]$, any $\bar D \in 2^\dataspace$ and $\xt \in \dataspace$~\cite{dong2022gaussian}.

We refer to the trade-off curve, i.e., the set of all attainable $\alpha_\phi, \beta_\phi$, which is equivalent to the ROC curve of the worst-case MIA, as the \emph{differential privacy distinguisher (DPD) risk} of attack $\phi(\cdot)$, following the prior terminology~\cite{salem2023sok}. As DP is a standard notion of privacy leakage in statistical learning, we consider DPD risk an appropriate measure of privacy risk in our settings.

\section{Record-specific MIA evaluation}\label{sec:recordspecificeval}
In this section, we formalize the traditional and model-seeded privacy games and their estimations of attacker success.
\subsection{Traditional privacy game}

We refer to the privacy game commonly used for record-specific evaluation for adversaries under realistic assumptions as the \textit{traditional} game~\cite{ye2022enhanced, stadler2022synthetic, houssiau2022tapas, meeus2023achilles, guepin2023synthetic}. In this privacy game, an attacker's success in inferring a target record's membership is evaluated over multiple runs of the attack, each using a freshly sampled target dataset and the same target record $\xt$. 
We formalize this game and later analyze the resulting risk estimate, which we denote as $\ravg$.

\begin{definition}[Traditional record-specific privacy game]
    For target record $\xt$, dataset size $n$, training algorithm $\train(\cdot)$, and attack $\phi(\cdot)$:
    \begin{enumerate}
        \item The challenger samples dataset $\Bar{D} \sim \datagen^{n}$ from the distribution with a fresh random seed.
        \item The challenger draws a secret bit $b\in\{0,1\}$ uniformly at random with a fresh random seed.
        \item If $b=1$, the challenger adds target record $\xt$ to dataset $\Bar{D}$ to form the target dataset $\dt = \Bar{D}\cup\{\xt \}$. Otherwise, $\dt = \bar D$.
        \item The challenger trains the target model $\mt \leftarrow \train(\dt)$ on dataset $\dt$ with a fresh random seed.
        \item The adversary outputs a guess $\hat{b} = \phi(\mt)$.
    \end{enumerate}
\end{definition}

\subsection{Model-seeded privacy game}

We now formalize the \textit{model-seeded} privacy game. The attacker's success is averaged across multiple runs of the attack, each using the same target record and target dataset, and sampling a fresh seed for training the target model. We denote the resulting risk estimate as $\rsp$.

\begin{definition}[Model-seeded record-specific privacy game]\label{def:ms_privacygame}For target record $\xt$, partial dataset $\bar D$, training algorithm $\train(\cdot)$, attack $\phi(\cdot)$,  and number of runs $N$:
    \begin{enumerate}
        \item The challenger draws a secret bit $b\in\{0,1\}$ uniformly at random with a fresh random seed.
        \item If $b=1$, the challenger adds target record $x$ to $\Bar D$ to form the target dataset: $\dt = \bar D \cup \{ \xt\}$. Otherwise, $\dt = \bar D$.
        \item The challenger trains the target model $\mt \leftarrow \train(\dt)$ on dataset $\dt$ with a fresh random seed.
        \item The adversary outputs a guess $\hat{b} = \phi(\mt)$.
    \end{enumerate}
\end{definition}

In contrast with the traditional privacy game, this game results in an estimate of the target record's risk within a specific target dataset. By using only the model seed as a source of randomness and eliminating dataset sampling, the ability of the MIA to infer the presence of the target record $\xt$ in models trained on $\dt$ is evaluated. To the best of our knowledge, this privacy game has so far been used exclusively to evaluate the worst-case attack, where the adversary is assumed to have full knowledge of the target dataset apart from the membership of the target record~\cite{annamalai2024theory, ganev2025elusive}, and never used to evaluate adversaries under realistic assumptions with access only to auxiliary data.

\subsection{The relationship between the games and privacy risk}\label{subsec:theoretical_result}
In this section, we analyze the relationship between success rates of MIAs as evaluated in the games introduced previously, and the DPD risk defined in \cref{sec:dp-risk}.
Consider $N > 1$ runs of either the model-seeded or traditional game with different random seeds, resulting in a set of guesses $\{ \hat b_i \}_{i \in [N]}$ with corresponding secret bits (i.e. membership labels) $\{ b_i \}_{i \in [N]}$. Let us denote the empirical FPR and FNR obtained in an evaluation using a privacy game for a given attack $\phi: \modelspace \rightarrow [0, 1]$:
\begin{align}
    \hat \alpha_\phi &= \frac{ \sum_{i=0}^{N} \mathbbm{1} \{ \hat b_i = 1 \land b_i = 0 \} }{ \sum_{i=0}^{N} \mathbbm{1} \{ b_i = 0 \} }, \\
    \quad \hat \beta_\phi &= \frac{ \sum_{i=0}^{N} \mathbbm{1} \{ \hat b_i = 0 \land b_i = 1 \} }{ \sum_{i=0}^{N} \mathbbm{1} \{ b_i = 1 \} }
\end{align}
We use $\hat \alpha^\text{T}_\phi$ or $\hat \alpha^\text{MS}_\phi$ to denote the empirical error rates computed using the traditional (T) and  model-seeded (MS) game, respectively. We use $\hat \beta^\text{T}_\phi$ and $\hat \beta^\text{MS}_\phi$ analogously.
We show that, with a sufficiently large number of repetitions of the game with freshly drawn seeds, the empirical FPR and FNR obtained using the model-seeded game converge exponentially fast to the DPD risk as defined in \cref{sec:dp-risk} for any given attack $\attack$ and record $x$:
\begin{restatable}[Model-seeded game converges to DPD risk]{proposition}{modelseededconvergence}\label{proposition:ms}

For any fixed target record $x$, partial dataset $\bar D \in \dataspace^{n-1}$, training algorithm $T(\cdot)$, and attack $\phi(\cdot)$, we have w.p. $1 - \rho$ for $\rho \in (0, 1)$ over $N$ random coin flips, i.e., fresh seed draws, in the model-seeded game:
    \begin{equation}
        \begin{aligned}
        |\alphasp - \alpha_\phi| \leq \sqrt{\frac{\log(2 / \rho)}{2N}}, \\ |\betasp - \beta_\phi| \leq \sqrt{\frac{\log(2 / \rho)}{2N}} \\
        \end{aligned}
    \end{equation}
\end{restatable}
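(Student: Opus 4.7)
The plan is to reduce the claim to a direct application of Hoeffding's inequality. The key structural observation is that in the model-seeded game the partial dataset $\bar D$ and target record $x$ are held fixed across all $N$ runs, and only the model seed (and the coin $b_i$) is refreshed. Consequently, conditional on $b_i=0$ the model $\theta_i \sim \train(\bar D)$ is drawn i.i.d., and conditional on $b_i=1$ we have $\theta_i \sim \train(\bar D \cup \{x\})$ i.i.d. Hence the random variables $\{\phi(\theta_i) : b_i = 0\}$ are i.i.d.\ in $[0,1]$ with mean exactly $\alpha_\phi$ as defined in \cref{eq:hypothesis-test}, and $\{1 - \phi(\theta_i) : b_i = 1\}$ are i.i.d.\ in $[0,1]$ with mean $\beta_\phi$.

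I would first unfold the empirical definitions to observe that $\alphasp$ is precisely the sample mean of $\phi(\theta_i)$ over the index set $S_0 = \{i : b_i = 0\}$, and $\betasp$ is the sample mean of $1 - \phi(\theta_i)$ over $S_1 = \{i : b_i = 1\}$. Given this, Hoeffding's inequality for bounded i.i.d.\ random variables gives $\Pr[|\bar X_M - \mu| \geq t \mid M] \leq 2 \exp(-2 M t^2)$, and setting the right-hand side equal to $\rho$ and inverting yields $t = \sqrt{\log(2/\rho)/(2M)}$. Applying this to each of the two empirical means with $M = N$ produces the two displayed bounds, with the factor of $2$ inside the logarithm coming from the two-sided form of Hoeffding (the proposition states each bound individually at confidence $1-\rho$, not jointly, so no additional union bound is required).

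The main subtlety I would need to address is the sample-size accounting. In the game as literally written, $b_i$ is drawn uniformly, so $|S_0|$ and $|S_1|$ are random (Binomial$(N, 1/2)$) rather than deterministically $N$. The cleanest resolutions are either (i) to interpret $N$ in the proposition as the number of runs performed \emph{on each side}, which matches the standard MIA evaluation convention in which the challenger trains $N$ in- and $N$ out-models deterministically, or (ii) to condition on the realized bit assignment and restate the bound in terms of $|S_0|$ and $|S_1|$, optionally adding a Chernoff bound on the number of runs per side to turn this back into a bound in $N$. I would adopt option (i) and flag the interpretation explicitly, after which the Hoeffding step is mechanical and no further work is needed.
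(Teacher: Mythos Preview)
Your proposal is correct and takes essentially the same approach as the paper: both reduce the claim to a direct application of Hoeffding's inequality to the i.i.d.\ outputs $\phi(\theta_i)$ over the in- and out-models, and invert the two-sided bound to obtain $\sqrt{\log(2/\rho)/(2N)}$. The paper simply posits $N$ in-models and $N$ out-models from the outset (matching your option (i)) without flagging the sample-size subtlety you raise, so your treatment is, if anything, slightly more careful.
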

This is based on observing that the model-seed game effectively samples from the probability distributions corresponding to $H_0$ and $H_1$ in \cref{eq:hypothesis-test-def} and consequently applying applying standard Hoeffding's inequality. We provide a detailed proofs of this and the following formal statements in \cref{appendix:proofs}.

In contrast, empirical error rates obtained using the traditional privacy game do not converge to the respective success rates of the attacks, but rather to their \emph{average} over i.i.d. dataset resamples:
\begin{restatable}[Traditional game converges to average privacy risk]{proposition}{avgdatasetconvergence}\label{proposition:tr}%
For any fixed target record $x$, dataset size $n > 1$, training algorithm $T(\cdot)$, and attack $\phi(\cdot)$, we have w.p. $1 - \rho$ for $\rho \in (0, 1)$ over $N$ random coin flips, i.e., fresh seed draws, in the traditional game:
    \begin{equation}
        \begin{aligned}
        |\alphaavg - \E_{\bar D \sim \datagen^{n}} \alpha_{\phi, \bar D}| \leq \sqrt{\frac{\log(2 / \rho)}{2N}}, \\ |\betaavg - \E_{\bar D \sim \datagen^{n}} \beta_{\phi, \bar D}| \leq \sqrt{\frac{\log(2 / \rho)}{2N}}, \\
        \end{aligned}
    \end{equation}
    where we explicitly use $\alpha_{\phi, \bar D}$ and $\beta_{\phi, \bar D}$ to emphasize the dependence of $
\alpha_\phi$ and $\beta_\phi$ on $\bar D$ in the definition of the hypothesis test in \cref{eq:hypothesis-test}.
\end{restatable}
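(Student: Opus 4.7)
The plan is to mirror the strategy sketched for Proposition~1 while tracking the additional layer of randomness introduced by the fresh dataset draw $\bar D \sim \datagen^n$ at each run of the traditional game. I would treat $\alphaavg$ and $\betaavg$ separately, since each isolates one side of the underlying hypothesis test, and both arguments are symmetric up to whether the target record is inserted into the training set.

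For $\alphaavg$, I would condition on the secret bit $b_i = 0$ in run $i$. Because the challenger uses fresh seeds for both the dataset draw and the training procedure, the resulting model is distributed as $\theta_i \sim \train(\bar D_i)$ with $\bar D_i \sim \datagen^n$, and the $\theta_i$ are independent across runs. By the tower property,
\begin{equation*}
\E[\phi(\theta_i) \mid b_i = 0] = \E_{\bar D \sim \datagen^n}\, \E_{\theta \sim \train(\bar D)}[\phi(\theta)] = \E_{\bar D \sim \datagen^n}[\alpha_{\phi, \bar D}],
\end{equation*}
so the indicators $\phi(\theta_i) \in [0,1]$ are i.i.d.\ with mean equal to the target quantity. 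Hoeffding's inequality then gives the stated $\sqrt{\log(2/\rho)/(2N)}$ deviation bound with probability at least $1-\rho$. For $\betaavg$, the same calculation with $b_i = 1$ yields $\theta_i \sim \train(\bar D_i \cup \{x\})$ and $\E[1 - \phi(\theta_i) \mid b_i = 1] = \E_{\bar D}[\beta_{\phi, \bar D}]$, to which Hoeffding again applies. Note that it is precisely the fresh resampling of $\bar D$ at each run, rather than keeping it fixed as in \cref{def:ms_privacygame}, that introduces the outer expectation $\E_{\bar D \sim \datagen^n}$ and hence turns the target of convergence into a dataset-averaged risk---the key contrast with Proposition~1.

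The main technical subtlety, inherited from the ratio form of the empirical error rates, is that the denominators $\sum_i \mathbbm{1}\{b_i = c\}$ are themselves random. I would side-step this exactly as in Proposition~1 by reading $N$ as the per-side sample count, so Hoeffding applies directly to $N$ i.i.d.\ Bernoulli summands per error rate. Handling the ratio form fully would require a Chernoff bound on the denominator combined with a union bound; this preserves the $O(1/\sqrt{N})$ rate but yields slightly worse constants, and is not the substance of the result, which is the identification of the limiting quantity as the dataset-averaged FPR and FNR.
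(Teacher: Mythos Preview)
Your proposal is correct and follows essentially the same approach as the paper: the paper's own proof of Proposition~2 is stated simply as ``analogous to the proof of Proposition~1,'' and your argument is exactly that analogue, with the tower property over $\bar D \sim \datagen^n$ supplying the dataset-averaged mean and Hoeffding applied to the per-side i.i.d.\ indicators. Your handling of the random-denominator issue by reading $N$ as the per-side sample count also matches what the paper does implicitly in its proof of Proposition~1.
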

Thus, the model-seeded game serves as an estimator of the DPD risk of an attack, as opposed to the traditional game, which estimates an average risk over hypothetical dataset re-samples.

\section{Experimental setup}\label{sec:experimentalsetup}
\subsection{Data, target model and attack details}\label{subsec:expsetup_data}

\paragraph{Datasets.}
We use the Adult~\cite{misc_adult_2} and UK Census~\cite{census2011} datasets. Both are anonymized samples of census data containing categorical and continuous demographic features. We partition each dataset into $D_{\text{aux}}$, used for MIA development, and $D_{\text{eval}}$, used for evaluation. We consider $|\dt|=1000$, $\dt\subset D_{\text{eval}}$. The sizes of the auxiliary dataset and evaluation pool are reported in~\cref{appendix:exp_details}.

\paragraph{Target models.}

We use Synthpop~\cite{synthpop} and Baynet~\cite{baynet} in our main experiments, and PrivBayes~\cite{baynet}, the differentially private version of Baynet, for our experiments training the target model with DP guarantees (\cref{sec:dp}). For all models, we use the implementations available in the reprosyn~\cite{reprosyn2022} repository.

\paragraph{MIA methodology.}

We use the extended version of TAPAS, the state-of-the-art query-based attack for SDGs, as originally introduced by~\citet{houssiau2022tapas}, and extended by~\citet{meeus2023achilles}. We train the attack for each target record using auxiliary dataset $D_{\text{aux}}$ to sample shadow datasets. The number of shadow models $N_{\text{shadow}}$ and other development configurations we use are specified in \cref{appendix:exp_details}. TAPAS operates under black-box model access with auxiliary data, but no access to the training data of the target model. Throughout the evaluation, we use AUC ROC as a summary metric for privacy risk.

\subsection{Practical implementation of the privacy games}

We outline in~\cref{alg:spec_testing} our practical implementation of the traditional and model-seeded privacy games. In both setups, we construct $\frac{N_{\text{eval}}}{2}=500$ `in' and `out' datasets each. We validate our choice of $N_{\text{eval}}$ in~\cref{appendix:convergence}. In the traditional setup, we sample `out' datasets of size $|\dt|$ from the evaluation pool $D_{\text{eval}}$, and ensure $\xt$ is included in exactly half. In the model-seeded setup, the `in' datasets are equivalent to the full target dataset $\dt$. To maintain an equal dataset size across runs, we construct the `out' datasets by replacing $\xt$ with a randomly sampled reference record  \(x_r\sim \mathsf{Unif}[D_{\text{eval}} \setminus D] \).

\begin{algorithm*}
    \caption{Practical privacy game implementation}
    \label{alg:spec_testing}
    \textbf{Input:} Target record \(\xt\), target dataset \(\dt\), evaluation pool \(D_{\text{eval}}\), partial evaluation pool \(\Bar D_{\text{eval}}=D_{\text{eval}} \setminus \{\xt\}\), MIA $\phi_x$ with membership score function $\score_{\xt}$, attack thresholds $\gamma\in\{ \gamma_1, \ldots, \gamma_m \}$, number of runs \(N_{\text{eval}}\), and privacy game flag $\mathrm{PG}\in\{ T, MS \}$. 
    $\mathrm{PG}=T$ indicates the traditional privacy game and $\mathrm{PG}=MS$ the model-seeded game. \\
    \textbf{Output:} Empirical error rates $[\hat \alpha^{PG}_{\attack, 1} , \ldots, \hat \alpha^{PG}_{\attack, m}] $ and $[\hat \beta^{PG}_{\attack, 1} , \ldots, \hat \beta^{PG}_{\attack, m}] $ for each attack threshold $\gamma$, and summary risk metric $R^{\mathrm{PG}}$ computed as the ROC AUC of attack $\phi_x$.
    \begin{algorithmic}[1] 
        \State{\comm{IN models}} 
        \For{$i = 0, 1, \ldots, \frac{N_{\text{eval}}}{2}$}
            \If{$\mathrm{PG}=MS$}
                \State \( D_{\text{in}}\leftarrow D \)
            \Else
                \State Sample $\bar D_{\text{in}}\sim \mathsf{Unif}[\Bar D_{\text{eval}}]^{|\dt| - 1} $
                \State \(D_{\text{in}} \leftarrow \bar D_{\text{in}} \cup \{\xt \} \)
            \EndIf
            \State Train evaluation model \(\theta_{\text{in}} \leftarrow T(\dt)\) with fresh random seed.
            \For{$\gamma_j \in \{ \gamma_1, \ldots, \gamma_m\}$}
                \State Compute attack prediction $\phi_x(\theta_{\text{in}}) = \mathbbm{1}[\score_{\xt}(\mt_{\text{in}}) \geq \gamma_j]$
                \State $ \hat b_{i,j} \leftarrow  \phi_x(\theta_{\text{in}})$
            \EndFor
            \State $b_i \leftarrow 1$
        \EndFor
        \State{\comm{OUT models}} 
        \For{$i = \frac{N_{\text{eval}}}{2}, \ldots, N_{\text{eval}}$}
            \If{$\mathrm{PG}=MS$}
                \State Sample reference record \(x_r\sim \mathsf{Unif}[D_{\text{eval}} \setminus D] \).\label{line:ref1}
                \State Remove $\xt$ from $\dt$ and replace with $x_r$ to construct dataset $D_{\text{out}}$.\label{line:ref2}
            \Else
                \State Sample $\bar D_{\text{out}} \sim \mathsf{Unif}[\Bar D_{\text{eval}}]^{|\dt|} $
            \EndIf
            \State Train evaluation model $\theta_{\text{out}} \leftarrow T(D_{\text{out}})$ with fresh random seed.
            \For{$\gamma_j \in \{ \gamma_1, \ldots, \gamma_m\}$}
                \State Compute attack prediction $\phi_x(\theta_{\text{out}}) = \mathbbm{1}[\score_{\xt}(\mt_{\text{out}}) \geq \gamma_j]$
                \State $ \hat b_{i,j} \leftarrow  \phi_x(\theta_{\text{out}})$
            \EndFor
            \State $b_i \leftarrow 0$
            \EndFor
        \State Compute empirical FPR $\hat \alpha_{\attack, j}^{PG}( \{ \hat b_{i,j} \}_{i = 1}^{N_{\text{eval}}}, \{ b_i \}_{i=1}^{N_{\text{eval}}} )$ for each $\gamma_j\in [\gamma_1,\ldots,\gamma_m]$
        \State Compute empirical FNR $\hat \beta_{\attack, j}^{PG}( \{ \hat b_{i,j} \}_{i = 1}^{N_{\text{eval}}}, \{ b_i \}_{i=1}^{N_{\text{eval}}} )$ for each $\gamma_j\in [\gamma_1,\ldots,\gamma_m]$
        \State Compute summary privacy risk  \( R^{\mathrm{PG}} = \mathrm{AUC}( \{ \hat \alpha_{\attack, j}^{PG}\}_{j=1}^m, \{ \hat \beta_{\attack, j}^{PG}\}_{j=1}^m ) \)
    \end{algorithmic}
\end{algorithm*}

\subsection{Comparison metrics}\label{subsec:comparison_metrics}

We use the following metrics to compare the traditional and model-seeded  risk estimates. 

\paragraph{Miss rate.} We define miss rate as the fraction of records classified as high-risk in the model-seeded setup, which are classified as low-risk in the traditional setup. For a subset of records in the target dataset $S\subseteq\dt$ for which we compute the risk estimates and high-risk threshold $t$, we compute the miss rate as: 
\begin{align}
    \mathrm{MR}(S) =  
    \frac{
    |\{  x\in S ~\mid~ \ravg(x) \leq t ~\wedge~ \rsp(x) >t \}|
    }{
    |\{  x\in S ~\mid~ \rsp(x) >t \}|
    }.
\end{align}

\paragraph{Root Mean Squared Deviation (RMSD).} For $S\subseteq\dt$, we compute the RMSD between the two risk estimates as
\begin{align}
    \mathrm{RMSD}(S) = \sqrt{\frac{1}{|S|} \sum_{x\in S} \left( \ravg(x) - \rsp(x)  \right)^2}.
\end{align}
RMSD measures the difference between traditional and model-seeded risks, capturing error magnitude regardless of whether traditional risk over- or underestimates model-seeded risk.

\section{Difference between \(\rsp\) and \(\ravg\)}\label{sec:mainresult}
\begin{figure*}[t]
    \centering
    \resizebox{\figsize}{!}{
        \includegraphics[width=0.4\linewidth, page=1]{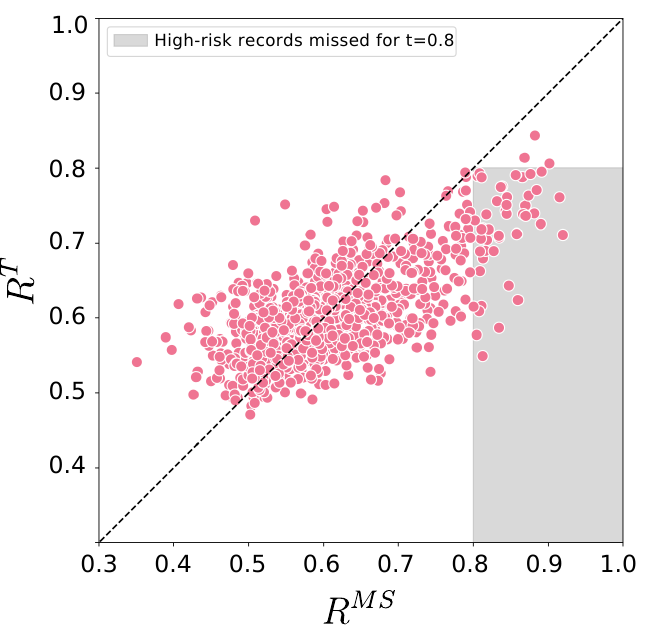}
        \includegraphics[width=0.375\linewidth]{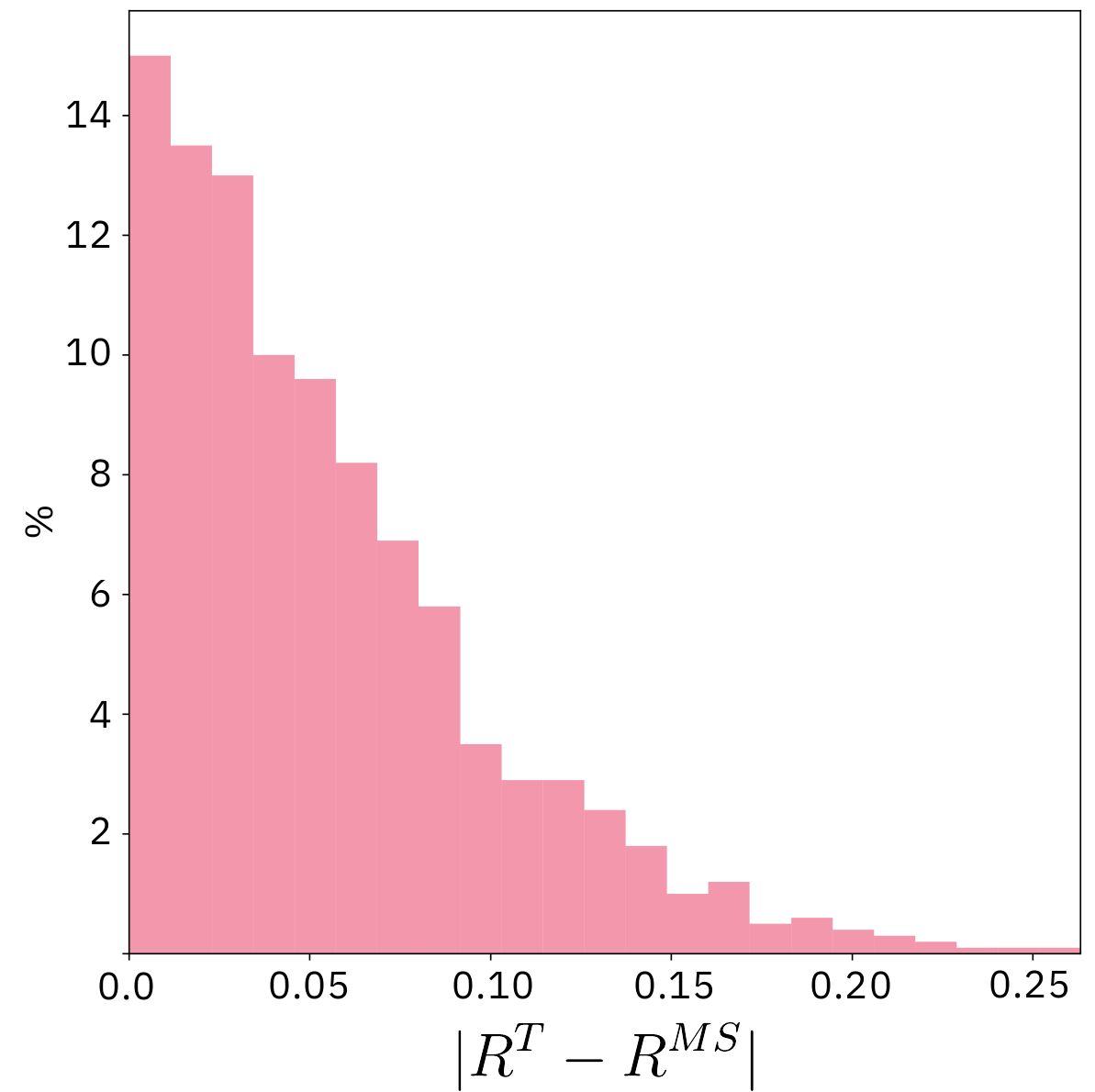}
    }
    \caption{Risk for all 1000 records in $\dt$ sampled from the Adult dataset (Synthpop). (a) per-record model-seeded and traditional risks. The shaded area marks all the high-risk records missed in the traditional setup for high-risk threshold $t=0.8$. (b) histogram of per-record absolute differences between the model-seeded and traditional risks. }
    \label{fig:adult}
\end{figure*}

We instantiate the TAPAS MIA across the datasets and SDGs described in \cref{sec:experimentalsetup} and evaluate them in the traditional and model-seeded setups. We then compare the resulting risk estimates using the metrics described in Section~\ref{subsec:comparison_metrics}.

Figure~\ref{fig:adult} shows the traditional and model-seeded risks to indeed differ substantially. For $\dt$ sampled from the Adult dataset and $\mt$ Synthpop, we compute the traditional and model-seeded risk estimates for all 1000 records in $\dt$ and present them in Figure~\ref{fig:adult}a. This shows that 94\% of high-risk records for high-risk threshold $t=0.8$ would indeed be incorrectly classified as low-risk when estimating risk in the traditional setup. Across all records, using the traditional setup leads to an RMSD of $0.07$, for a value that empirically ranges roughly from 0.5 to 1. Figure~\ref{fig:adult}b shows a histogram of absolute differences between the two risk estimates across records, showing that the estimate would be off by more than $0.1$ for $15\%$ of records when using the traditional setup, and could go up to $0.26$.
Table~\ref{tab:miss_rates_synthetic} shows that these results are consistent across SDG setups. Across both datasets and generators, the miss rates show that high-risk records are consistently being incorrectly identified. If the traditional setup were a good approximation of the risk we should observe miss rates close to 0. Instead, we observe values ranging from $0.73$ to $0.94$. The majority of the records that are highly vulnerable will thus be incorrectly considered low-risk if MIAs are evaluated using the traditional setup. We also obtain high RMSD across setups, ranging from $0.04$ to $0.11$, which represents a significant error for risk estimated using AUC.

\begin{table}[t]
    \caption{Miss rate across different datasets and target synthetic data generators. We use a high-risk threshold of $t=0.8$.}
    \centering
    \vspace{-1em}
    \begin{tabular}{cccc} \\ \toprule
         Dataset & Model & $\mathrm{RMSD}$ & $\mathrm{MR}$\\ 
          \midrule
         \multirow{2}{*}{Adult} & Synthpop & 0.07 &  0.94\\
          & Baynet & 0.05& 0.73 \\ \midrule
         \multirow{2}{*}{Census} & Synthpop  & 0.11 &  0.94\\
          & Baynet  &  0.04 & 0.75 \\  \bottomrule
    \end{tabular}
    \label{tab:miss_rates_synthetic}
\end{table}

\paragraph{Different high-risk threshold $t$ values.} We further examine how the choice of high-risk threshold $t$ influences the miss rate. \cref{fig:missrates_cdfs}a shows that, for all high-risk thresholds, the miss rates are substantial, reaching values above 20\% for all setups for $t=0.6$ and up to 80\% for $t=0.9$. Using the traditional setup instead of the model-seeded setup for MIA evaluation thus leads to high-risk records being incorrectly classified as low-risk, regardless of the threshold choice.

Notably, we further find that the miss rate increases with larger threshold values $t$. As the criterion for classifying records as high-risk becomes more strict, identifying such records becomes more difficult, and the traditional setup fails to detect an increasing fraction of them. 

\begin{figure*}[t]
  \centering
  \resizebox{\figsize}{!}{
      \includegraphics[width=0.45\linewidth]{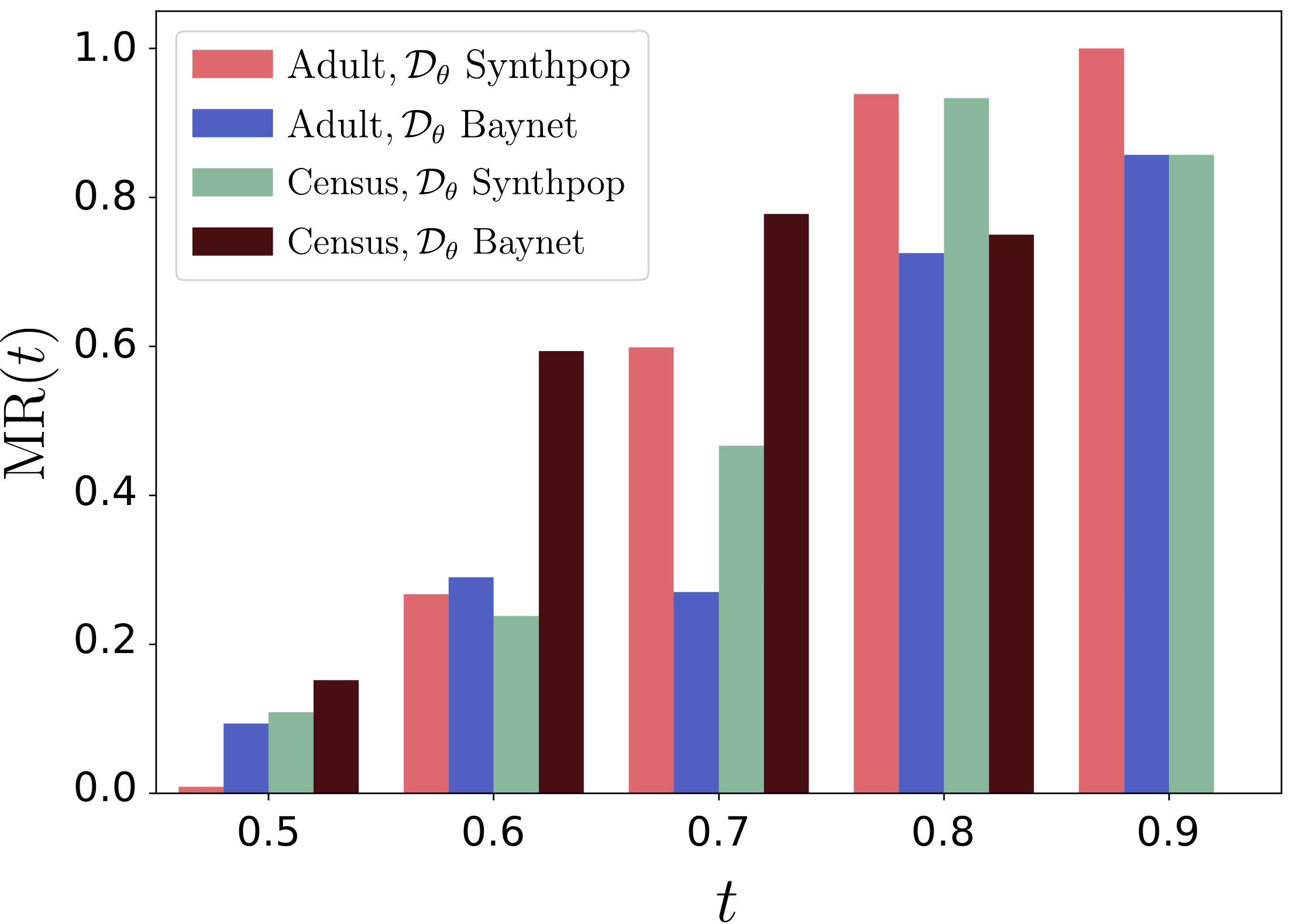}
      \includegraphics[width=0.42\linewidth]{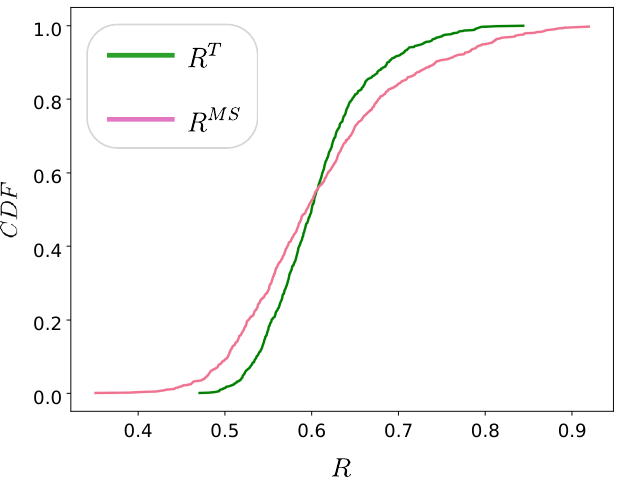}
  }
  \caption{(a) Miss rate for different high-risk thresholds $t$ for SDG setups. Note that for Census and Baynet, there are no records with $\rsp>0.9$, therefore the miss rate is not defined. (b) CDFs of $\rsp$ and $\ravg$ across records in target dataset $\dt$ drawn from Adult dataset with $\mt$ Synthpop.}
  \label{fig:missrates_cdfs} 
\end{figure*}

\paragraph{Risk distribution across records.} 
Figure~\ref{fig:missrates_cdfs}b further shows the cumulative distribution of record-level risk estimates under both setups. Risks estimated using the traditional setup are more narrowly distributed, with fewer records assigned very low or very high risk estimates compared to the model-seeded setup. This is particularly meaningful in the high-risk tail, as the traditional setup underestimates worst-case risk for all records in the dataset. For instance, in the Adult dataset, the 90th percentile of traditional risk is 0.68, compared to 0.74 under the model-seeded setup, indicating a substantially higher worst-case risk that the traditional setup fails to capture.

\paragraph{Dataset size.}\label{sec:ablation}

In~\cref{sec:mainresult}, we consider target datasets $\dt$ of size $1000$, a commonly used setup for MIAs against tabular SDGs~\cite{stadler2022synthetic, meeus2023achilles, guepin2023synthetic}. We now study how the size of the target dataset influences the gap between traditional and model-seeded risks. For 20 records from the Adult dataset and with target model Synthpop, we compute the model-seeded and traditional risks for target datasets $\dt$ ranging in size $|\dt|=n$ for $n \in \{ 200, \ldots, 10000 \}$.

\begin{figure*}
  \centering
  \resizebox{\figsize}{!}{
  \includegraphics[width=\linewidth]{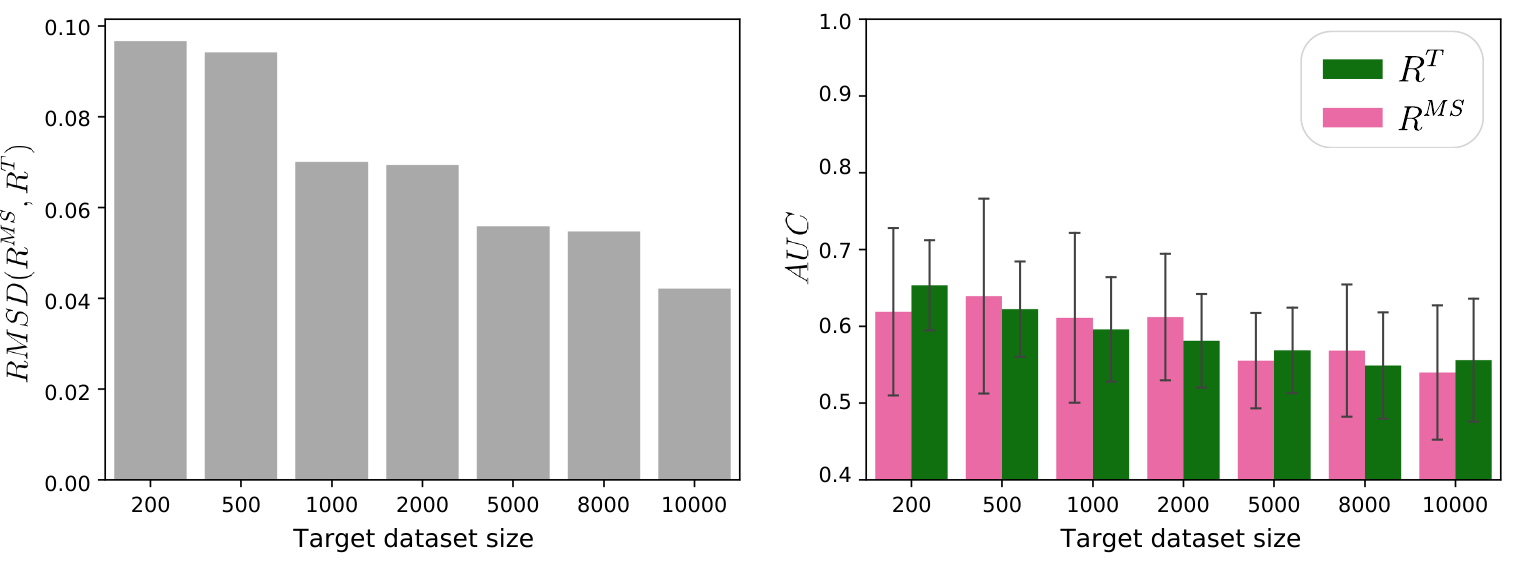}
  }
  \caption{(a) RMSD between model-seeded and traditional risk per target dataset size. (b) Model-seeded and traditional risk values per target dataset size. For both figures, values are computed across $20$ target records.}
  \label{fig:adult_ablation} 
\end{figure*}

\cref{fig:adult_ablation}a shows the RMSD between the two risk estimates across the $20$ target records for different target dataset sizes $|\dt|$. The error decreases for larger $\dt$, but is significant even for  $|\dt|=10,000$.~\cref{fig:adult_ablation}b shows the MIA AUC computed in both setups, averaged across the target records. MIA performance decreases---though it remains better than random---for larger datasets, naturally decreasing the gap between the risk estimates. Yet, highly vulnerable records are present even in large datasets, and the traditional and model-seeded risks do not converge to the same values, showing the importance of using the model-seeded game regardless of dataset size.

\paragraph{Training with formal privacy guarantees.}\label{sec:dp}

The standard way to protect against MIAs is by ensuring formal privacy guarantees, in particular by training SDGs with differential privacy (DP). Differential privacy provides a theoretical upper bound for attacker success, and MIAs can then be used to compute empirical lower bounds and validate DP implementations~\cite{annamalai2024theory, jagielski2020auditingdifferentiallyprivatemachine}. We now analyze the effect of training with DP with varying $\varepsilon$ on the model-seeded and traditional risks. For $20$ target records sampled from the Adult dataset, we compute the traditional and model-seeded risks for $\mt$ Privbayes with $\varepsilon\in\{ \infty, 100, 10, 1 \}$, and study their values and the resulting difference between then.

\cref{fig:dp_scatterplot} shows that MIA performance decreases for stricter values of $\varepsilon$, converging to the random guess baseline (AUC of $0.5$). As the estimates become concentrated around $0.5$, the difference between them also decreases, indicating that training with stricter privacy guarantees may lead to the specific target dataset having less impact on the individual risk of each record. 

We argue that in the case of training with DP guarantees, it is also important to use the model-seeded setup to evaluate MIAs. Indeed, when validating DP guarantees, estimating the risk on average rather than for the actual training data may mask a violation, potentially leading to undetected privacy leakage.

\begin{figure*}[ht]
  \centering
  \includegraphics[width=\linewidth]{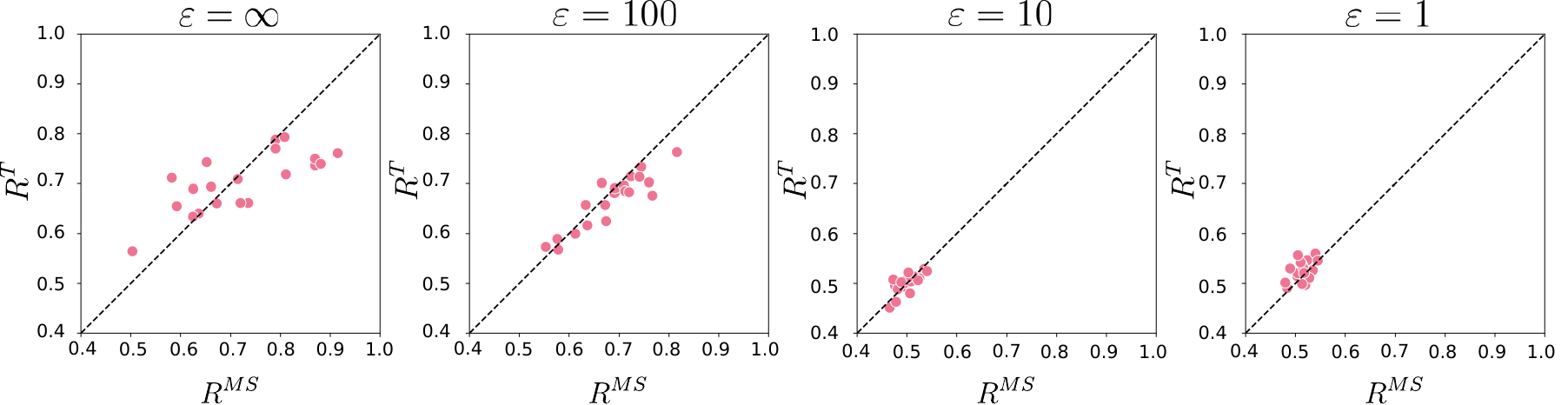}
  \vspace{-2em}
  \caption{Model-seeded and traditional risk for $20$ records for Adult dataset and $\mt$ Privbayes with varying $\varepsilon$.}
  \label{fig:dp_scatterplot}
\end{figure*}

\paragraph{Evaluating one record's risk within different datasets.}
We use the Adult dataset and the Synthpop model to illustrate an example of the potential negative impact of using the traditional instead of the model-seeded setup. We compute the risk of a single target record in the traditional setup. Then, we compute its model-seeded risk in 15 randomly selected datasets sampled in the traditional setup. As shown in~\cref{fig:rugplot}, the model-seeded risk $\rsp$ varies from approximately 0.5 (random guess) to 0.8 (high risk), depending on the dataset. The traditional risk is $\ravg = 0.62$, underestimating the DPD risk by up to 0.2 in the worst case.

\begin{figure*}[ht]
    \centering
    \resizebox{.6\linewidth}{!}{
    \includegraphics[width=\linewidth]{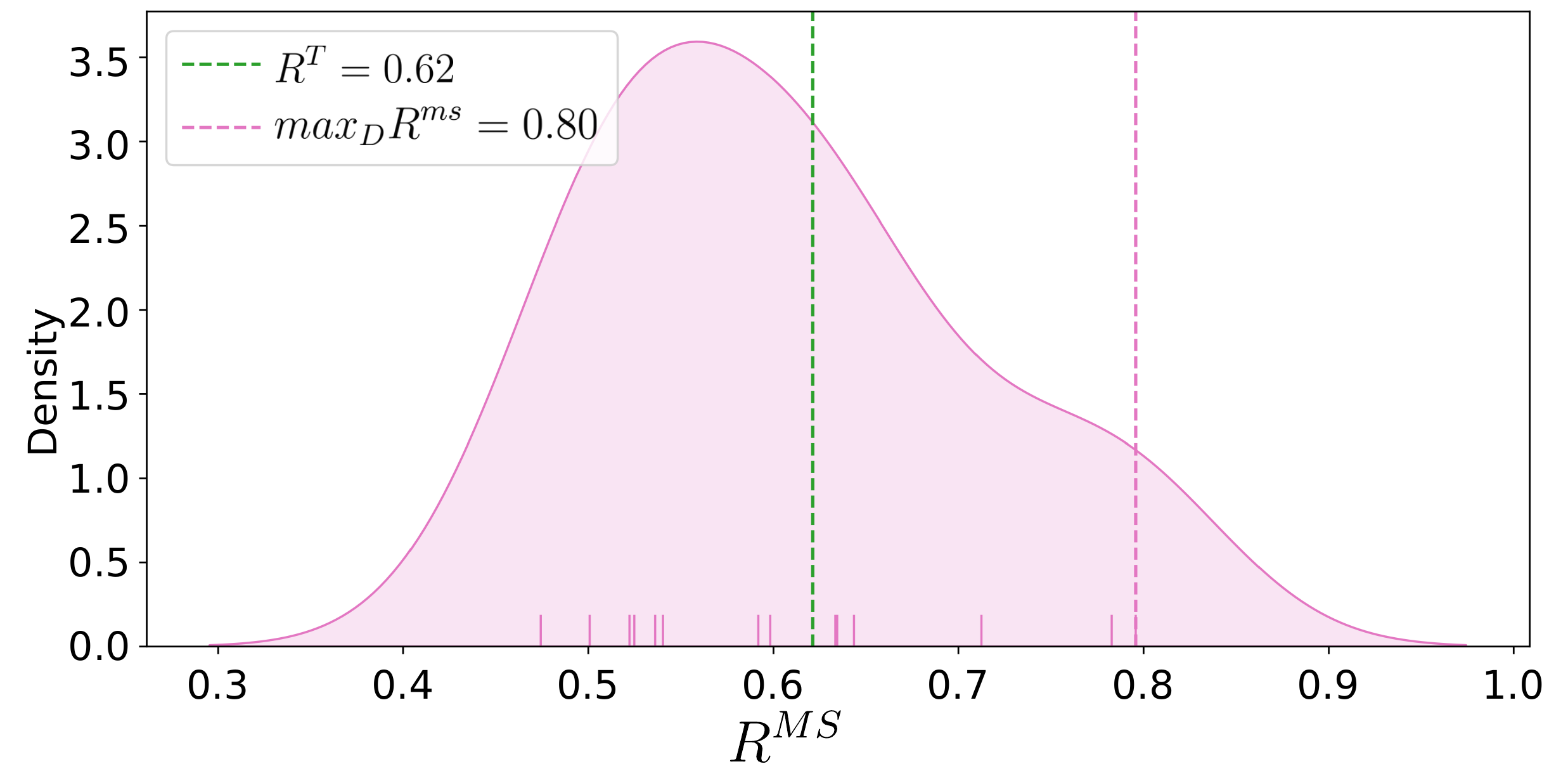}
    }
    \caption{Model-seeded risks of one target record within 15 different datasets and its traditional risk.}
    \label{fig:rugplot}
\end{figure*}

\section{Related work}\label{sec:relatedwork}
\paragraph{Membership inference attacks (MIAs).}

\citet{shokri2017membership} introduced the first MIA against ML models, using the target model's predictions to infer membership of a given record. Their approach relies on the \textit{shadow modeling} technique where multiple models including and excluding the target record are trained to approximate its impact on the final model. Various attacks based on shadow modeling have since been proposed~\cite{sablayrolles2019white, ye2022enhanced, song2020systematicevaluationprivacyrisks, zarifzadeh2024lowcosthighpowermembershipinference, watson2021importance}, typically leveraging the model loss computed for the target record as the primary signal to infer membership. The current state-of-the-art attack for ML models, introduced by \citet{carlini2022membership}, performs a likelihood ratio test between the loss distribution of models trained on the target record and those not.

Tabular synthetic data generators generally model a dataset as a whole, learning feature distributions and sampling synthetic records from them~\cite{synthpop,baynet, 10.1145/3085504.3091117}. As a result, they do not have a notion of per-record loss, and standard MIAs designed for ML models cannot be directly applied to them. Instead, specialized attacks that rely on generated data and are developed for individual records have recently been proposed, using shadow modeling to capture the impact of a specific target record on the synthetic data~\cite{stadler2022synthetic, meeus2023achilles, houssiau2022tapas}. \citet{stadler2022synthetic} compute statistical features on synthetic data generated by shadow models and use them to train a meta-classifier to predict membership of the target record. \citet{houssiau2022tapas} propose TAPAS, extending the attack of Stadler et al. to include $k$-way queries which count the number of synthetic records with identical values to the target record for random subsets of $k$ features. \citet{meeus2023achilles} extend the $k$-way queries of TAPAS to include less-than-or-equal-to queries for continuous columns.

\paragraph{Threat models.}
A threat model defines the setup of an attack by specifying the attacker's assumed access to the target model and data. Model access can be black-box or white-box. For attacks against tabular synthetic data, black-box adversaries are most common, where the attacker knows the model architecture and can either query the model an arbitrary number of times~\cite{houssiau2022tapas}, or has access to a fixed set of generated data~\cite{meeus2023achilles, stadler2022synthetic}. Black-box attacks are widely studied for predictive models, where the attacker has access to predicted probabilities~\cite{carlini2022membership, zarifzadeh2024lowcosthighpowermembershipinference, song2020systematicevaluationprivacyrisks} or labels~\cite{choquette2021label}. White-box adversaries also know the internal parameters of the model, and are more common for image generators and some predictive ML models~\cite{hayes2019logan, pang2024whiteboxmembershipinferenceattacks, azadmanesh2021whitebox, matsumoto2023membership, cretu2023investigating}. 

Data access defines the data available to the attacker and its relationship to the target data. Adversaries for both tabular synthetic data and predictive models are usually assumed to have access to an auxiliary dataset drawn from the same distribution as the target data~\cite{stadler2022synthetic, meeus2023achilles, houssiau2022tapas, carlini2022membership}. \citet{guepin2023synthetic} show that this assumption can be relaxed by replacing the auxiliary dataset with synthetic data, albeit with reduced attack performance. The literature on privacy auditing considers a very strong \textit{leave-one-out} attacker, who has knowledge of the full target dataset apart from the membership of the target record~\cite{jagielski2020auditingdifferentiallyprivatemachine, nasr2021adversaryinstantiationlowerbounds, tramer2022debuggingdifferentialprivacycase}.

\paragraph{MIA evaluation.}
MIAs are commonly evaluated using a \emph{privacy game} between an attacker and a challenger~\cite{carlini2022membership, stadler2022synthetic, yeom2018privacy, pyrgelis2018knock,jayaraman2021revisiting}. 
\citet{ye2022enhanced} provide a comprehensive overview of privacy games used to evaluate MIAs, distinguishing between \emph{model-specific} and \emph{record-specific} privacy games. In model-specific games, the attacker attempts to distinguish between records included or excluded from the training data of one target model. They are commonly used to evaluate attacks based on the model's prediction confidences~\cite{carlini2022membership, zarifzadeh2024lowcosthighpowermembershipinference, meeus2025canarysechoauditingprivacy, song2020systematicevaluationprivacyrisks, matsumoto2023membership}. Record-specific privacy games evaluate the ability of the attacker to distinguish between models trained on the target record and those not. The record-specific privacy game for an average dataset (our \textit{traditional} privacy game) is typically used to evaluate attacks for synthetic tabular data~\cite{houssiau2022tapas, meeus2023achilles, stadler2022synthetic}. 

\citet{ye2022enhanced} define a privacy game for a \textit{fixed worst-case record and dataset} which is used in existing literature to test differential privacy guarantees with a very strong \textit{leave-one-out} attacker~\cite{annamalai2024theory, jagielski2020auditingdifferentiallyprivatemachine, nasr2021adversaryinstantiationlowerbounds, tramer2022debuggingdifferentialprivacycase}. To the best of our knowledge, this privacy game has never been used for weaker attackers or attacks against synthetic data. The model-seeded game can be used to evaluate the worst-case attacker, but is applicable to any attack, regardless of assumptions. Specifically, the goal of the model-seeded game is to measure the DPD risk for \emph{any} attacker, rather than only the worst-case attacker.

\section{Discussion}\label{sec:discussion}
We show that the model-seeded privacy game provides an unbiased estimate of a record's DPD risk, and that the traditional game averages the risk across datasets. Our empirical results show that the difference between the two games is meaningful in practice. Using the traditional instead of the model-seeded setup leads to an error of $0.11$ and $85\%$ of high-risk records being mistaken for low-risk on average across setups. This confirms that the assumption that the risk of a record is unaffected by the dataset it belongs to is optimistic, and that using the traditional game could lead to vulnerable records being overlooked. Though training with large target datasets or formal privacy guarantees decreases the difference between the two risk estimates, the model-seeded game provides a more appropriate estimate of the privacy risk, and should be used regardless. 

The exact nature of the impact of the dataset on a record's vulnerability is an open question. Prior work has suggested outliers--records in under-represented groups and records with rare feature values to be highly vulnerable~\cite{kulynych2022disparate, stadler2022synthetic, meeus2023achilles, carlini2022privacy, feldman2020does}. These factors are dataset-specific, and they may not be preserved across sampled datasets. For example, a record that is an outlier in one dataset may not be one in another, in particular in small or high-dimensional datasets. Very large datasets may better estimate the overall distribution of the data and preserve the outlier status of a record across datasets. The model-seeded and traditional risk may then be closer in value, as the datasets are overall more similar.

\section{Conclusion}\label{sec:conclusion}

In this work, we show that the privacy game typically used to evaluate record-specific attackers under realistic assumptions against SDGs disregards the specific dataset's impact on a record's risk, leading to potentially misleading risk estimates. We propose the model-seeded privacy game for evaluating any attacker, regardless of assumption, and show that it converges to the DPD risk of a record. We show that the difference between the two risk estimates is significant. We hope this work helps organizations handling sensitive data, such as in healthcare~\cite{lotan2020medical} and finance~\cite{fca_synth}, better assess data leakage risks and maintain high privacy standards when releasing synthetic data.


\renewcommand{\bibsection}{\section*{References}} 
\bibliographystyle{plainnat}
\bibliography{mybibliography}

\onecolumn
\appendix
\section{Omitted formal results}\label{appendix:proofs}
\modelseededconvergence*
\begin{proof}[Proof]
Consider the set of \emph{in} models $\{\theta_\text{in}^{(i)}\}_{i=1}^N$ and the set of \emph{out} models $\{\theta_\text{out}^{(i)}\}_{t=1}^N$ obtained in the model-seeded game. Let us define $X_i$ for $i \in [N]$ as follows:
\begin{align}
X_i &= \mathbbm{1}[\phi(\theta_\text{out}^{(i)}) = 1]
\end{align}
The set $\{X_i\}_{i=1}^N$ is a set of independent Bernoulli random variables. Let $\bar{X} = \frac{1}{N}\sum_{i=1}^{N}X_i$. Then $\hat \alpha = \bar{X}$. Moreover, we have that $\alpha = \E[\bar X]$, where the expectations are over sampling $\{\theta_\text{out}^{(i)}\}_{i=1}^N$.
By Hoeffding's inequality, for any $\gamma > 0$:
\begin{align}
\Pr[|\bar{X} - \mathbb{E}[\bar{X}]| \geq \gamma] &\leq 2e^{-2N\gamma^2}
\end{align}
Setting $2e^{-2N \gamma^2} = \rho$, we get:
\begin{align}
\gamma = \sqrt{\frac{\log(2/\rho)}{2N}},
\end{align}
which yields the sought statement. We get analogous results for $\beta$.

\end{proof}

The proof of~\cref{proposition:tr} is analogous to the proof of~\cref{proposition:ms}.


\section{Experimental details}\label{appendix:exp_details}

We report in~\cref{tab:sdg_params} the exact experimental details used in our main experiments.

\begin{table}[H]
    \centering
    \caption{Experimental details for synthetic data generator setups.}
    \label{tab:sdg_params}
    \begin{tabular}{cccccc}\toprule
         Dataset & $N_{\text{shadow}}$ & $N_{\text{eval}}$ & $|D_{\text{aux}}|$ & $|D_{\text{eval}}|$ & $|\dt|$  \\ \midrule
         Adult & 1000 & 1000 & 30000 & 15222 & 1000 \\
        Census & 1000 & 1000 & 52390 & 27193 & 1000 \\ 
         \bottomrule
    \end{tabular}
\end{table}

\section{Risk score convergence}\label{appendix:convergence}

To ensure that differences between $\rsp$ and $\ravg$ are not due to instability, we study the estimates as the number of evaluation models $N_{\text{eval}}$ increases. For 10 target records and $N_{\text{eval}}\in \{100, 200, 300, \cdots ,2000 \}$, we run the full evaluation pipeline for both setups, record the AUC, and repeat this 10 times, sampling new seeds each time. \cref{fig:stability}a shows the scores to stabilize around $N_{\text{eval}}=1000$. Figure~\ref{fig:stability}b shows the theoretical upper bound on the absolute error defined in Proposition~\ref{proposition:ms} to converge at a similar rate as the empirical scores.

\begin{figure}[H]
  \centering
  \includegraphics[width=\linewidth]{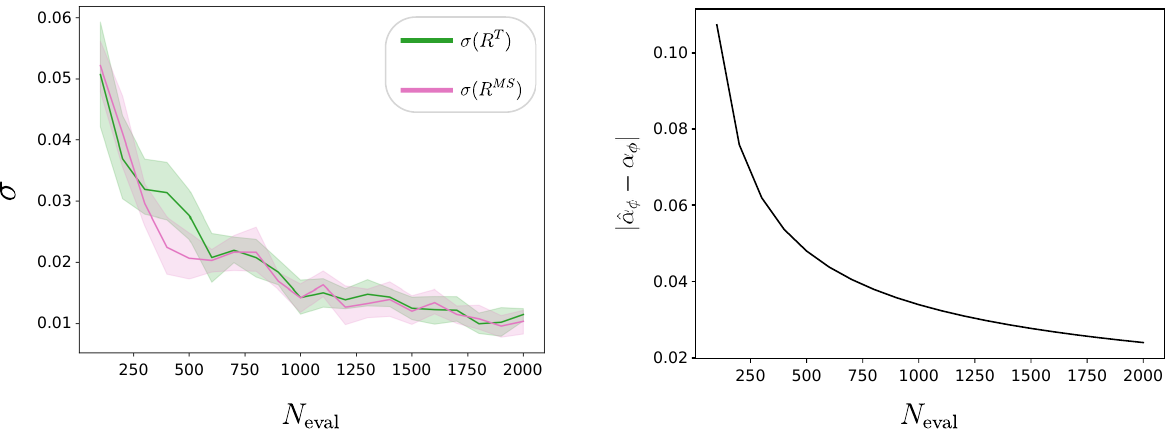}
  \caption{(a) Standard deviation of $R^{MS}$ and $R^{T}$ for different values of $N_{\text{eval}}$. (b) Theoretical upper bound on absolute error of risk estimate with $\rho=0.2$}
  \label{fig:stability} 
\end{figure}

\end{document}